\newtheorem{prop}{Proposition}
\begin{document}
%
\title{A Closer Look at Invalid Action Masking in Policy Gradient Algorithms}

\author{Shengyi Huang \textnormal{and} Santiago Onta\~{n}\'{o}n \thanks{Currently at Google} \\
College of Computing \& Informatics, 
Drexel University\\
Philadelphia, PA 19104 \\
\texttt{\{sh3397,so367\}@drexel.edu}
}

\maketitle
\begin{abstract}
\begin{quote}
In recent years, Deep Reinforcement Learning (DRL) algorithms have achieved state-of-the-art performance in many challenging strategy games. Because these games have complicated rules, an action sampled from the full discrete action distribution predicted by the learned policy is likely to be invalid according to the game rules (e.g., walking into a wall). The usual approach to deal with this problem in policy gradient algorithms is to ``mask out'' invalid actions and just sample from the set of valid actions. The implications of this process, however, remain under-investigated. 
In this paper, we 1) show theoretical justification for such a practice, 2) empirically demonstrate its importance as the space of invalid actions grows, and 3) provide further insights by evaluating different action masking regimes, such as removing masking after an agent has been trained using masking.
\end{quote}
\end{abstract}

\section{Introduction}


Deep Reinforcement Learning (DRL) algorithms have yielded state-of-the-art game playing agents in challenging domains such as Real-time Strategy (RTS) games~\cite{vinyals2017starcraft,vinyals2019grandmaster} and Multiplayer Online Battle Arena (MOBA) games~\cite{Berner2019Dota2W,ye2019mastering}. Because these games have complicated rules, the valid discrete action spaces of different states usually have different sizes. That is, one state might have 5 valid actions and another state might have 7 valid actions. To formulate these games as a standard reinforcement learning problem with a singular action set, previous work combines these discrete action spaces to a \emph{full discrete action space} that contains available actions of all states~\cite{vinyals2017starcraft,Berner2019Dota2W,ye2019mastering}. Although such a full discrete action space makes it easier to apply DRL algorithms, one issue is that an action sampled from this full discrete action space could be invalid for some game states, and this action will have to be discarded.

To make matters worse, some games have extremely large full discrete action spaces and an action sampled will typically be invalid. As an example, the full discrete action space of Dota~2 has a size of 1,837,080~\cite{Berner2019Dota2W}, and an action sampled might be to buy an item, which can be \emph{valid} in some game states but will become \emph{invalid} when there is not enough gold. 
To avoid repeatedly sampling invalid actions in full discrete action spaces, recent work applies policy gradient algorithms in conjunction with a technique known as invalid action masking, which ``masks out'' invalid actions and then just samples from those actions that are valid~\cite{vinyals2017starcraft,Berner2019Dota2W,ye2019mastering}. To the best of our knowledge, however, the theoretical foundations of invalid action masking have not been studied and its empirical effect is under-investigated.
\begin{figure}[t]
  \centering
    \includegraphics[width=0.3\textwidth]{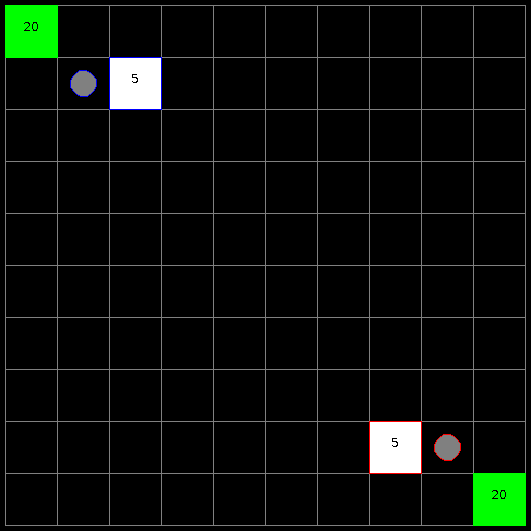}\,%
  \caption{A screenshot of $\mu$RTS. Square units are ``bases'' (light grey, that can produce workers), ``barracks'' (dark grey, that can produce military units), and ``resources mines'' (green, from where workers can extract resources to produce more units), the circular units are ``workers'' (small, dark grey) and military units (large, yellow or light blue), and on the right is the $10\times10$ map we used to train agents to harvest resources. The agents could control units at the top left, and the units in the bottom left will remain stationary.}
  \label{fig:microrts}
\end{figure}
In this paper, we take a closer look at invalid action masking in the context of games, pointing out the gradient produced by invalid action masking corresponds to a valid policy gradient. More interestingly, we show that in fact, invalid action masking can be seen as applying a \emph{state-dependent differentiable function} during the calculation of the action probability distribution, to produce a behavior policy. 
Next, we design experiments to compare the performance of {\em invalid action masking} versus {\em invalid action penalty}, which is a common approach that gives negative rewards for invalid actions so that the agent learns to maximize reward by not executing any invalid actions. We empirically show that, when the space of invalid actions grows, invalid action masking scales well and the agent solves our desired task while invalid action penalty struggles to explore even the very first reward. Then, we design experiments to answer two questions: (1) What happens if we remove the invalid action mask once the agent was trained with the mask? 
(2) What is the agent's performance when we implement the invalid action masking naively by sampling the action from the masked action probability distribution but updating the policy gradient using the unmasked action probability distribution? Finally, we made our source code available at GitHub for the purpose of reproducibility\footnote{\footnotesize \url{https://github.com/vwxyzjn/invalid-action-masking}}.

\section{Background}

In this paper, we use  policy gradient methods to train agents. 
Let us consider the Reinforcement Learning problem in a Markov Decision Process (MDP) denoted as $(S,A,P, \rho_0, r,\gamma, T)$, where $S$ is the state space, $A$ is the discrete action space, $P: S \times A \times S \rightarrow [0, 1]$ is the state transition probability, $\rho_0: S\rightarrow [0,1]$ is the initial state distribution, $r: S \times A \rightarrow \mathbb{R}$ is the reward function, $\gamma$ is the discount factor, and $T$ is the maximum episode length. A stochastic policy $\pi_{\theta}: S \times A \rightarrow [0,1]$, parameterized by a parameter vector $\theta$, assigns a probability value to an action given a state. The goal is to maximize the expected discounted return of the policy:
\begin{gather*}
        J = \mathbb{E}_{\tau}\left[\sum_{t=0}^{T-1} \gamma^{t} r_{t}\right]\\
        \text { where } \tau \text { is the trajectory } \left(s_{0}, a_{0}, r_{0},  \dots, s_{T-1}, a_{T-1}, r_{T-1}\right),\\
        \text  s_{0} \sim \rho_{0}, s_t \sim P(\cdot \vert s_{t-1}, a_{t-1}), a_t \sim \pi_{\theta}(\cdot \vert s_t), r_{t}=r\left(s_{t}, a_{t}\right)
\end{gather*}
The core idea behind policy gradient algorithms is to obtain the \textsl{policy gradient} $\nabla_{\theta}J$  of the expected discounted return with respect to the policy parameter $\theta$. Doing gradient ascent $\theta = \theta + \nabla_{\theta}J$ therefore maximizes the expected discounted reward. 
Earlier work proposes the following policy gradient estimate to the objective $J$~\cite{sutton2018reinforcement}:
\begin{align*}
   \nabla_{\theta}J = \mathbb{E}_{\tau\sim\pi_\theta}\left[ \sum_{t=0}^{T-1} \nabla_{\theta}\log\pi_{\theta}(a_t|s_t)G_t \right]\mathrm{,} \  G_{t} = \sum_{k=0}^{\infty} \gamma^{k} r_{t+k}
\end{align*}

\section{Invalid Action Masking}
Invalid action masking is a common technique implemented to avoid repeatedly generating invalid actions in large discrete action spaces~\cite{vinyals2017starcraft,Berner2019Dota2W,ye2019mastering}. To the best of our knowledge, there is no literature providing detailed descriptions of the implementation of invalid action masking. Existing work~\cite{vinyals2017starcraft,Berner2019Dota2W} seems to treat invalid action masking as an auxiliary detail, usually describing it using only a few sentences. Additionally, there is no literature providing theoretical justification to explain why it works with policy gradient algorithms. In this section, we examine how invalid action masking is implemented and prove it indeed corresponds to valid policy gradient updates~\cite{sutton2000policy}. More interestingly, we show it works by applying a \emph{state-dependent differentiable function} during the calculation of action probability distribution. 

First, let us see how a discrete action is typically generated through policy gradient algorithms. Most policy gradient algorithms employ a neural network to represent the policy, which usually outputs unnormalized scores (logits) and then converts them into an action probability distribution using a softmax operation or equivalent, which is the framework we will assume in the rest of the paper. For illustration purposes, consider an MDP with the action set $A=\{a_0, a_1, a_2, a_3\}$ and $S=\{s_0, s_1\}$, where the MDP reaches the terminal state $s_1$ immediately after an action is taken in the initial state $s_0$ and the reward is always +1. Further, consider a policy $\pi_{\theta}$ parameterized by $\theta = [l_0, l_1, l_2, l_3]=[1.0,1.0,1.0,1.0]$ that, for the sake of this example, directly produces $\theta$ as the output logits. Then in $s_0$ we have: 
\begin{align}
    \pi_{\theta}(\cdot \vert s_0)&= [\pi_{\theta}(a_0|s_0), \pi_{\theta}(a_1|s_0), \pi_{\theta}(a_2|s_0), \pi_{\theta}(a_3|s_0)]  \nonumber \\
    &=\text{softmax}([l_0, l_1, l_2, l_3])  \label{eq:softmax_logits2}\\
    &=[0.25, 0.25, 0.25, 0.25],  \nonumber\\
    &\text{where }\pi_{\theta}(a_i|s_0) = \frac{\exp(l_i)}{\sum_j \exp(l_j)} \nonumber
\end{align} 
At this point, regular policy gradient algorithms will sample an action from $\pi_{\theta}(\cdot \vert s_0)$. Suppose $a_0$ is sampled from $\pi_{\theta}(\cdot \vert s_0)$, and the policy gradient can be calculated as follows:
\begin{align*}
    \begin{aligned}
g_{\text{policy}} &=  \mathbb{E}_{\tau}\left [\nabla_{\theta} \sum_{t=0}^{T-1} \log\pi_{\theta}(a_t|s_t)G_t \right] \\
    &=\nabla_{\theta}\log\pi_{\theta}(a_0|s_0)G_0 \\
    &= [ 0.75, -0.25, -0.25, -0.25]
    \end{aligned} \\
    (\nabla_{\theta}\log \text{softmax}(\theta)_j)_i = \begin{cases}
 (1 -  \frac{\exp(l_j)}{\sum_j \exp(l_j)}) &\text{if $i=j$}\\
 \frac{-\exp(l_j)}{\sum_j \exp(l_j)} &\text{otherwise}
\end{cases}
\end{align*}
Now suppose $a_2$ is invalid for state $s_0$, and the only valid actions are $a_0, a_1, a_3$. Invalid action masking helps to avoid sampling invalid actions by ``masking out'' the logits corresponding to the invalid actions. This is usually accomplished by replacing the logits of the actions to be masked by a large negative number $M$ (e.g. $M = -1 \times 10^8$).  
Let us use $mask: \mathbb{R} \rightarrow \mathbb{R}$ to denote this masking process, and we can calculate the re-normalized probability distribution $\pi'_{\theta}(\cdot \vert s_0)$ as the following:
\begin{align}
    &\pi'_{\theta}(\cdot \vert s_0)= \text{softmax}(mask([l_0, l_1, l_2, l_3])) \label{eq:softmax_masked_logits}\\
    &=\text{softmax}([l_0, l_1, M, l_3]) \\
    &= [\pi'_{\theta}(a_0|s_0), \pi'_{\theta}(a_1|s_0), \epsilon, \pi'_{\theta}(a_3|s_0)] \label{eq:softmax_masked_logits2}\\
    &= [0.33, 0.33, 0.0000, 0.33] \nonumber
\end{align}
where $\epsilon$ is the resulting probability of the masked invalid action, which should be a small number. If $M$ is chosen to be sufficiently negative, the probability of choosing the masked invalid action $a_2$ will be virtually zero. After finishing the episode, the policy is updated according to the following gradient, which we refer to as the \emph{invalid action policy gradient}.
\begin{align}
\label{eq:invalid_action_masking_gradient}
    g_{\text{invalid action policy}} &= \mathbb{E}_{\tau}\left [\nabla_{\theta} \sum_{t=0}^{T-1} \log\pi'_{\theta}(a_t|s_t)G_t \right] \\
    &=\nabla_{\theta}\log\pi'_{\theta}(a_0|s_0)G_0  \\
    & = [ 0.67, -0.33,  0.0000, -0.33]\nonumber
\end{align}
This example highlights that invalid action masking appears to do more than just ``renormalizing the probability distribution''; it in fact makes the gradient corresponding to the logits of the invalid action to zero.

\subsection{Masking Still Produces a Valid Policy Gradient}
The action selection process is affected by a process that seems external to $\pi_\theta$ that calculates the mask. It is therefore natural to wonder how does the policy gradient theorem~\cite{sutton2000policy} apply. 
As a matter of fact, our analysis shows that the process of invalid action masking can be considered as a state-dependent differentiable function applied for the calculation of $\pi'_{\theta}$, and therefore $g_{\text{invalid action policy}}$ can be considered as a policy gradient update for $\pi'_{\theta}$.



\begin{prop}
\label{prop:policy_gradient}
$g_{\text{invalid action policy}}$ is the policy gradient of policy $\pi'_{\theta}$.
\end{prop}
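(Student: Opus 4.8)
The plan is to show that $\pi'_{\theta}$ is itself an ordinary differentiable stochastic policy, so that the classical policy gradient theorem of~\cite{sutton2000policy} applies to it verbatim, with $g_{\text{invalid action policy}}$ being exactly the resulting gradient estimator. In other words, rather than thinking of the mask as something ``external'' to the policy, I would absorb it into the policy parameterization as one extra differentiable layer.

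First I would make the dependency structure explicit. The network maps $\theta$ to logits $\text{logits}_{\theta}(s)\in\mathbb{R}^{|A|}$, and the mask at a state $s$ is determined solely by the game rules at $s$, not by $\theta$; hence one can write $\pi'_{\theta}(\cdot\vert s)=\text{softmax}\bigl(\text{mask}_s(\text{logits}_{\theta}(s))\bigr)$, where $\text{mask}_s:\mathbb{R}^{|A|}\to\mathbb{R}^{|A|}$ leaves the valid coordinates untouched and sends each invalid coordinate to the constant $M$. The key observation is that $\text{mask}_s$ is differentiable: coordinate-wise it is either the identity map or a constant map, so its Jacobian is the diagonal $0/1$ matrix that zeroes out exactly the invalid coordinates. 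Since $\text{softmax}$ is smooth and the network is differentiable in $\theta$, the composition $\theta\mapsto\pi'_{\theta}(\cdot\vert s)$ is differentiable for every $s$, and by construction $\pi'_{\theta}(\cdot\vert s)$ is a valid probability distribution over $A$ (non-negative, sums to one). Thus $\{\pi'_{\theta}\}$ is a perfectly legitimate parameterized family of stochastic policies.

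Next I would simply invoke the policy gradient theorem for this policy. Letting $J'$ denote the expected discounted return obtained by \emph{running} $\pi'_{\theta}$ in the MDP,~\cite{sutton2000policy} gives $\nabla_{\theta}J'=\mathbb{E}_{\tau\sim\pi'_{\theta}}\bigl[\sum_{t=0}^{T-1}\nabla_{\theta}\log\pi'_{\theta}(a_t\vert s_t)G_t\bigr]$, and the right-hand side is by definition $g_{\text{invalid action policy}}$ from~\eqref{eq:invalid_action_masking_gradient}. This also explains, rather than merely asserts, the behavior in the worked example: the chain rule routes the gradient through $\text{mask}_s$, whose Jacobian annihilates the entries associated with invalid actions, so $\partial\log\pi'_{\theta}(a_0\vert s_0)/\partial l_2=0$.

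The main obstacle is handling the idealized limit $M\to-\infty$ cleanly, since there $\pi'_{\theta}$ assigns probability exactly $0$ to masked actions and $\log\pi'_{\theta}$ is $-\infty$ at those actions, formally breaking the smoothness the theorem needs. I would address this in two steps: (i) for any finite $M$ the argument above is rigorous, $\pi'_{\theta}$ is smooth everywhere, and $g_{\text{invalid action policy}}$ is exactly its policy gradient; and (ii) $\log\pi'_{\theta}(a_t\vert s_t)$ is only ever evaluated at actions actually sampled from $\pi'_{\theta}(\cdot\vert s_t)$, which for sufficiently negative $M$ are valid actions whose probability is bounded away from $0$, so the singular coordinates never enter the estimator. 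A secondary point worth flagging is that the expectation must be over the trajectory distribution induced by $\pi'_{\theta}$ itself, not by the unmasked $\pi_{\theta}$; this is automatic once $\pi'_{\theta}$ is accepted as ``the policy,'' and it is precisely why the naive variant discussed in the introduction (sample from $\pi'_{\theta}$ but differentiate $\log\pi_{\theta}$) fails to be a valid policy gradient.
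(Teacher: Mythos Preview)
Your proposal is correct and follows essentially the same approach as the paper: view $mask$ as a state-dependent differentiable function (coordinatewise either the identity or a constant), conclude that $\pi'_{\theta}$ is a bona fide differentiable parameterized policy, and invoke the policy gradient theorem of~\cite{sutton2000policy}. Your write-up is in fact more thorough than the paper's, which does not address the $M\to-\infty$ subtlety or make explicit that the expectation must be taken under $\pi'_{\theta}$; those additions are correct refinements rather than a different argument.
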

\begin{proof} Let $s\in S$ to be arbitrary and consider the process of invalid action masking as a differentiable function $mask$ to be applied to the logits $l(s)$ outputted by policy $\pi_\theta$ given state $s$. Then we have:
\begin{align*}
    \pi'_{\theta}(\cdot \vert s) &= \text{softmax}(mask(l(s))) \\
    mask(l(s))_i &= \begin{cases}
     l_i &\text{if $a_i$ is valid in $s$}\\
     M &\text{otherwise}
    \end{cases}
\end{align*}
Clearly, $mask$ is either an identity function or a constant function for elements in the logits. Since these two kinds of functions are differentiable, $\pi'_{\theta}$ is differentiable to its parameters $\theta$. That is, $\frac{\partial \pi'_{\theta}(a\vert s)}{\partial \theta}$ exists for all $a\in A, s\in S$, which satisfies the assumption of  policy gradient theorem~\cite{sutton2000policy}. Hence, $g_{\text{invalid action policy}}$ is the policy gradient of policy $\pi'_{\theta}$.
\end{proof}
Note that $mask$ is \emph{not} a piece-wise linear function. If we plot $mask$, it is either an identity function or a constant function, depending on the state $s$, going from $-\infty$ to $+\infty$. We therefore call $mask$ a state-dependent differentiable function. That is, given a vector $x$  and two states $s, s'$ with different number of invalid actions available in these states, $mask(s,x) \neq mask(s', x)$.

\begin{table}[t]
\begin{small}
\centering
\caption{Observation features and action components.}
\begin{tabular}{p{2.3cm}lp{3cm}} 
\toprule
Observation Features  & Planes & Description \\
\midrule
Hit Points & 5 & 0, 1, 2, 3, $\geq 4$  \\ 
Resources & 5 & 0, 1, 2, 3, $\geq 4$  \\ 
Owner &3 & player 1, -, player 2 
\\ 
Unit Types &8 & -, resource, base, barrack, worker, light, heavy, ranged \\ 
Current Action &6& -, move, harvest, return, produce, attack\\ 
\midrule
Action Components  & Range & Description \\
\midrule
Source Unit & $[0,h \times w-1]$ & the location of the unit selected to perform an action  \\ 
Action Type & $[0,5]$ & NOOP, move, harvest, return, produce, attack  \\ 
Move Parameter & $[0,3]$ & north, east, south, west \\ 
Harvest Parameter & $[0,3]$  & north, east, south, west  \\
Return Parameter & $[0,3]$ & north, east, south, west  \\
Produce Direction Parameter & $[0,3]$ & north, east, south, west  \\
Produce Type Parameter & $[0,6]$ & resource, base, barrack, worker, light, heavy, ranged \\
Attack Target Unit & $[0,h\times w-1]$  & the location of unit that  will be attacked \\
\bottomrule
\end{tabular}
\label{tab:action-components}
\end{small}
\end{table}

\begin{figure*}[ht]
\centering
{\includegraphics[width=0.97\textwidth]{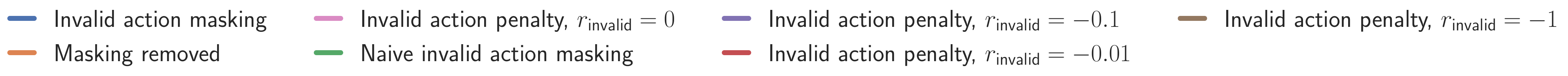}}\hfill
\subfloat[$4\times4$ Map]{\includegraphics[width=0.245\textwidth]{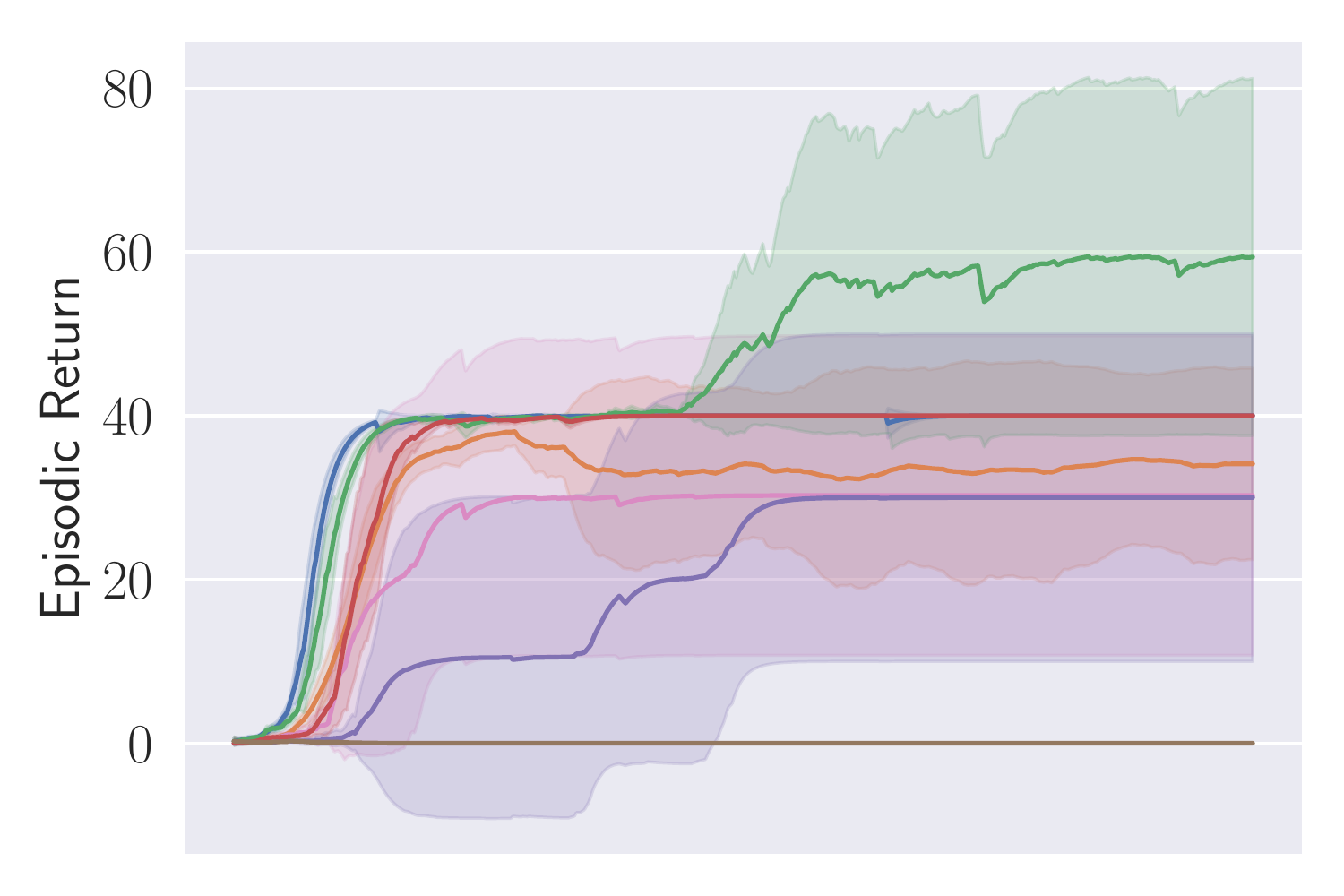}}
\subfloat[$10\times10$ Map]{\label{fig:sub:10x10}\includegraphics[width=0.245\textwidth]{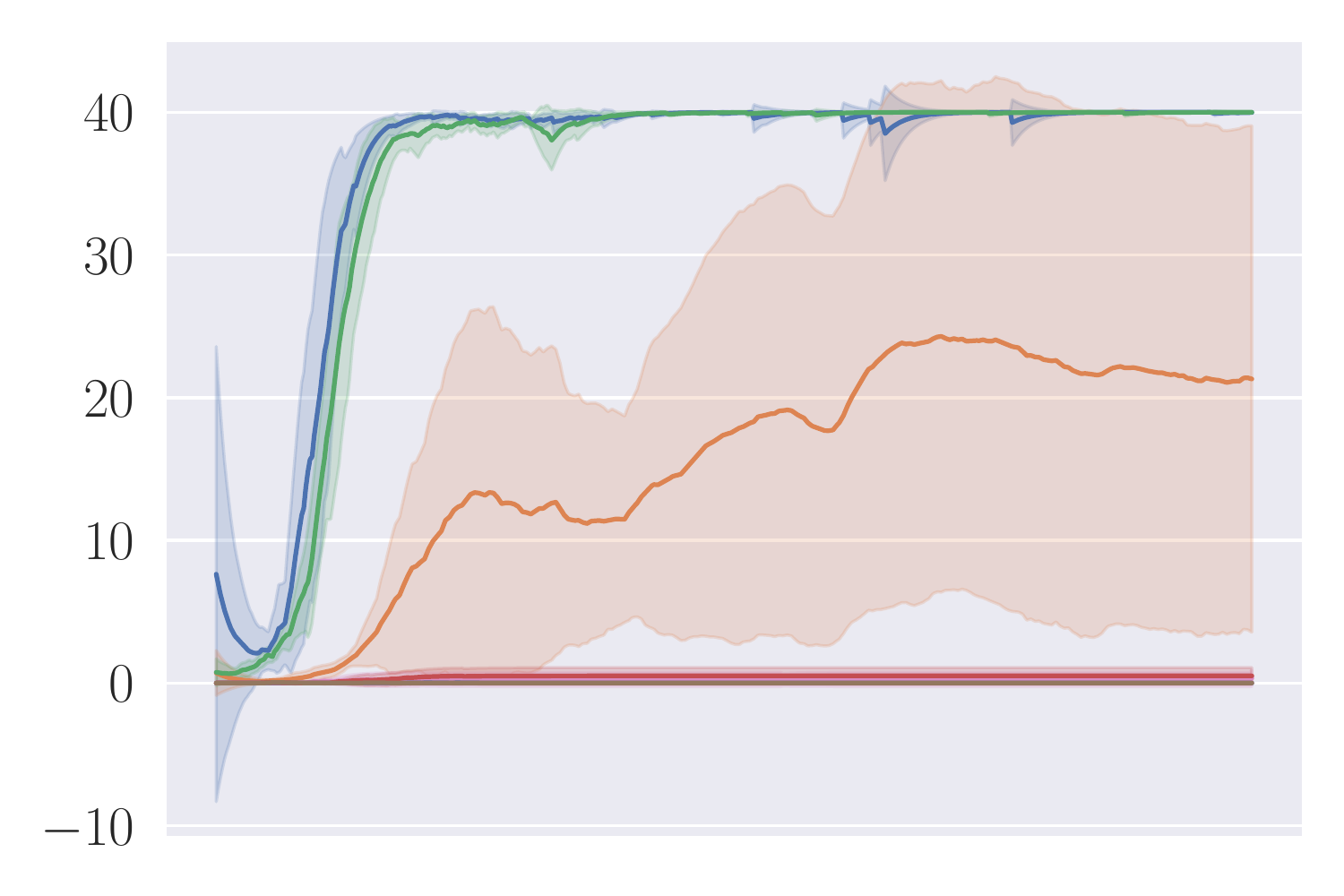}}
\subfloat[$4\times4$ Map]{\includegraphics[width=0.245\textwidth]{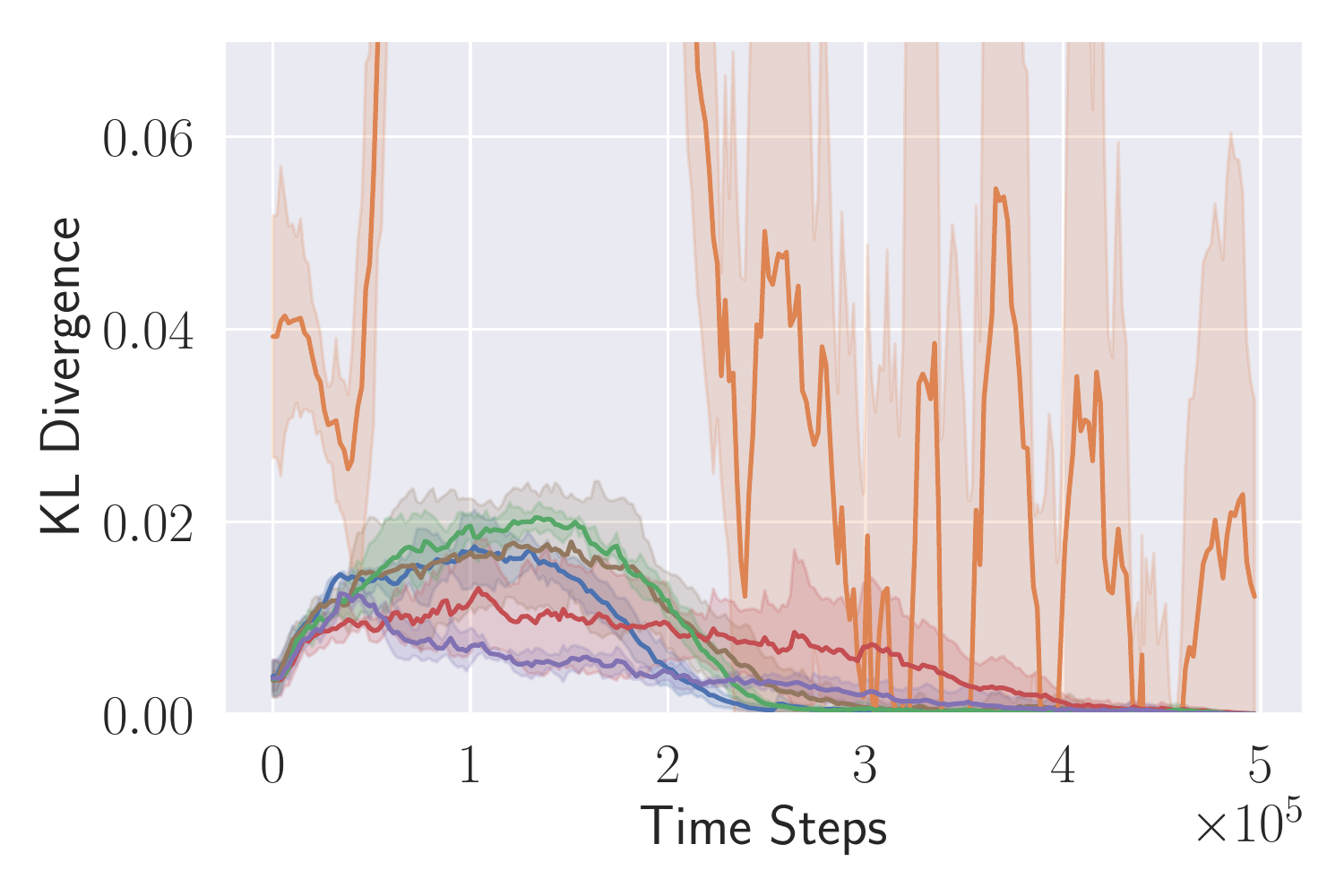}}
\subfloat[$10\times10$ Map]{\includegraphics[width=0.245\textwidth]{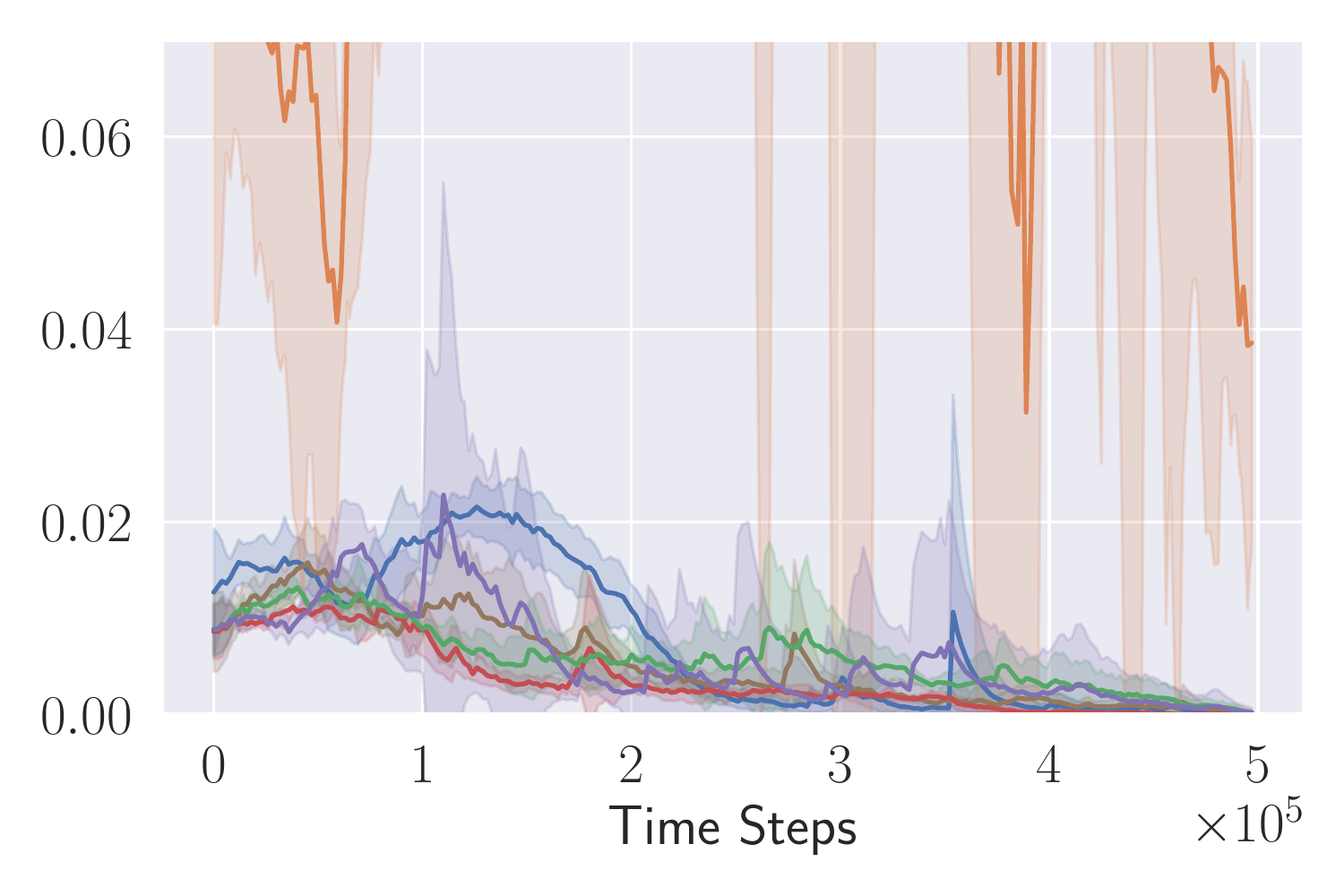}}\hfill
  \caption{The first two figures show the episodic return over the time steps, and the remaining two show the Kullback–Leibler (KL) divergence between the target and current policy of PPO over the time steps. The shaded area represents one standard deviation of the data over 4 random seeds. Curves are exponentially smoothed with a weight of 0.9 for readability.}
  \label{fig:invalid_action_masking_vs_penalty_part}
\end{figure*}

\section{Experimental Setup}
\label{sec:expsetup}
In the remainder of this paper, we provide a series of empirical results showing the practical implications of invalid action masking.


\subsection{Evaluation Environment}
\label{sec:evaluation_environment}
We use $\mu$RTS\footnote{\url{https://github.com/santiontanon/microrts}} as our testbed, which is a minimalistic RTS game maintaining the core features that make RTS games challenging from an AI point of view: simultaneous and durative actions, large branching factors, and real-time decision-making. A screenshot of the game can be found in Figure~\ref{fig:microrts}. 
It is the perfect testbed for our experiments because the action space in $\mu$RTS grows combinatorially and so does the number of invalid actions that could be generated by the DRL agent. 
We now present the technical details of the environment for our experiments.

\begin{itemize}[leftmargin=*]
    \item \textbf{Observation Space.} Given a map of size $h\times w$, the observation is a tensor of shape $(h, w, n_f)$, where $n_f$ is a number of feature planes that have binary values. The observation space used in this paper uses 27 feature planes as shown in Table~\ref{tab:action-components}, similar to previous work in $\mu$RTS~\cite{stanescu2016evaluating,Yang2018LearningME,huang2019comparing}. A feature plane can be thought of as a concatenation of multiple one-hot encoded features. As an example, if there is a worker with hit points equal to 1, not carrying any resources, the owner being Player 1, and currently not executing any actions, then the one-hot encoding features will look like the following:
    \begin{align*}
        [0,1,0,0,0],  [1,0,0,0,0],  [1,0,0], \\ [0,0,0,0,1,0,0,0],  [1,0,0,0,0,0]
    \end{align*}
    The 27 values of each feature plane for the position in the map of such worker will thus be the concatenation of the arrays above.
    \item \textbf{Action Space.} Given a map of size $h\times w$, the action is an 8-dimensional vector of discrete values as specified in Table~\ref{tab:action-components}. The action space is designed similar to the action space formulation by Hausknecht, et al.,~\cite{hausknecht2015deep}. The first component of the action vector represents the unit in the map to issue actions to, the second is the action type, and the rest of the components represent the different parameters different action types can take. Depending on which action type is selected, the game engine will use the corresponding parameters to execute the action. 
    \item \textbf{Rewards.} We are evaluating our agents on the simple task of harvesting resources as fast as they can for Player 1 who controls units at the top left of the map. A $+1$ reward is given when a worker harvests a resource, and another $+1$ is received once the worker returns the resource to a base.
    \item \textbf{Termination Condition.} We set the maximum game length to be 200 time steps, but the game could be terminated earlier if all the resources in the map are harvested first.
    
\end{itemize}


Notice that the space of invalid actions becomes significantly larger in larger maps. This is because the range of the first and last discrete values in the action space, corresponding to {\em Source Unit} and {\em Attack Target Unit} selection, grows linearly with the size of the map. To illustrate, in our experiments, there are usually only two units that can be selected as the {\em Source Unit} (the base and the worker).  Although it is possible to produce more units or buildings to be selected, the production behavior has no contribution to reward and therefore is generally not learned by the agent. Note the range of {\em Source Unit} is $4\times4=16$ and $24\times24=576$, in maps of size $4\times4$ and $24\times24$, respectively. Selecting a valid {\em Source Unit} at random has a probability of $2/16=0.125$ in the $4\times4$ map and $2/576=0.0034$ in the $24\times24$ map. With such action space, we can examine the scalability of invalid action masking.

\begin{table*}[tb]
    \centering
\caption{Results averaged over 4 random seeds. The symbol ``-'' means ``not applicable''. Higher is better for $r_{\text{episode}}$ and lower is better for $a_\text{null}$,  $a_\text{busy}$, $a_\text{owner}$,  $t_\text{solve}$, and  $t_\text{first}$.}
\begin{small}
\begin{tabular}{lllrrrrrll}
\toprule
Strategies & Map size & $r_{\text{invalid}}$ &    $r_{\text{episode}}$  &  $a_\text{null}$ &  $a_\text{busy}$ &  $a_\text{owner}$ &  $t_\text{solve}$ &  $t_\text{first}$ \\

\midrule
Invalid action penalty & $4\times4$ & -1.00 &                   0.00 &                           0.00 &                                0.00 &                                0.00 &                - &                 0.53\% \\
                             &       & -0.10 &                  30.00 &                           0.02 &                                0.00 &                                0.00 &                 50.94\% &                 0.52\% \\
                             &       & -0.01 &                  \textbf{40.00} &                           0.02 &                                0.00 &                                0.00 &                 14.32\% &                 0.51\% \\
                             &       &  0.00 &                  30.25 &                           2.17 &                                0.22 &                                2.70 &                 36.00\% &                 0.60\% \\
                             & $10\times10$ & -1.00 &                   0.00 &                           0.00 &                                0.00 &                                0.00 &                - &                 3.43\% \\
                             &       & -0.10 &                   0.00 &                           0.00 &                                0.00 &                                0.00 &                - &                 2.18\% \\
                             &       & -0.01 &                   0.50 &                           0.00 &                                0.00 &                                0.00 &                - &                 1.57\% \\
                             &       &  0.00 &                   0.25 &                          90.10 &                                0.00 &                              102.95 &                - &                 3.41\% \\
                             & $16\times16$ & -1.00 &                   0.25 &                           0.00 &                                0.00 &                                0.00 &                - &                 0.44\% \\
                             &       & -0.10 &                   0.75 &                           0.00 &                                0.00 &                                0.00 &                - &                 0.44\% \\
                             &       & -0.01 &                   1.00 &                           0.02 &                                0.00 &                                0.00 &                - &                 0.44\% \\
                             &       &  0.00 &                   1.00 &                         184.68 &                                0.00 &                                2.53 &                - &                 0.40\% \\
                             & $24\times24$ & -1.00 &                   0.00 &                          49.72 &                                0.00 &                                0.02 &                - &                 1.40\% \\
                             &       & -0.10 &                   0.25 &                           0.00 &                                0.00 &                                0.00 &                - &                 1.40\% \\
                             &       & -0.01 &                   0.50 &                           0.00 &                                0.00 &                                0.00 &                - &                 1.92\% \\
                             &       &  0.00 &                   0.50 &                         197.68 &                                0.00 &                                0.90 &                - &                 1.83\% \\ \midrule
Invalid action masking & 04x04 &  - &                  \textbf{40.00} &                           - &                                - &                               - &                  8.67\% &                \textbf{0.07\%} \\
                             & 10x10 &  - &                  \textbf{40.00} &                           - &                                - &                                - &                 11.13\% &                \textbf{0.05\%} \\
                             & 16x16 &  - &                  \textbf{40.00} &                           - &                                - &                                - &                 11.47\% &                 \textbf{0.08\%} \\
                             & 24x24 &  - &                  \textbf{40.00} &                           - &                                - &                                - &                 18.38\% &                 \textbf{0.07\%} \\ \midrule
Masking removed & 04x04 &  - &                  33.53 &                          63.57 &                                0.00 &                               17.57 &                 76.42\% &                 - \\
                             & 10x10 &  - &                  25.93 &                         128.76 &                                0.00 &                                7.75 &                 94.15\% &                 - \\
                             & 16x16 &  - &                  17.32 &                         165.12 &                                0.00 &                                0.52 &                - &                - \\
                             & 24x24 &  - &                  17.37 &                         150.06 &                                0.00 &                                0.94 &                - &                - \\ \midrule
Naive invalid action  & $4\times4$ &  - &                  \textbf{59.61} &                          - &                                - &                                - &                 11.74\% &                 \textbf{0.07\%} \\
masking                             & $10\times10$ &  - &                  \textbf{40.00} &                           - &                                - &                                - &                 13.97\% &                 \textbf{0.05\%} \\
                             & $16\times16$ &  - &                  \textbf{40.00} &                           - &                                - &                                - &                 30.59\% &                 \textbf{0.10\%} \\
                             & $24\times24$ &  - &                  \textbf{38.50} &                           - &                                - &                                - &                 49.14\% &                 \textbf{0.07\%} \\
\bottomrule
\end{tabular}
\end{small}
    \label{tab:all_results}
\end{table*}

\subsection{Training Algorithm}
We use Proximal Policy Optimization~\cite{schulman2017proximal} as the DRL algorithm to train our agents. 

\subsection{Strategies to Handle Invalid Actions}
To examine the empirical importance of invalid action masking, we compare the following four strategies to handle invalid actions.
\begin{enumerate}
    \item \textbf{Invalid action penalty.} Every time the agent issues an invalid action, the game environment adds a non-positive reward $r_{\text{invalid}} \leq 0$ to the reward produced by the current time step. This technique is standard in previous work~\cite{dietterich2000hierarchical}. We experiment with $r_{\text{invalid}} \in \{0, -0.01, -0.1, -1\}$, respectively, to study the effect of the different scales on the negative reward.
    \item \textbf{Invalid action masking.} At each time step $t$, the agent receives a mask on the {\em Source Unit} and {\em Attack Target Unit} features such that only valid units can be selected and targeted. Note that in our experiments, invalid actions still could be sampled because the agent could still select incorrect parameters for the current action type. We didn't provide a feature-complete invalid action mask for simplicity, as the mask on {\em Source Unit} and {\em Attack Target Unit} already significantly reduce the action space.
    \item \textbf{Naive invalid action masking.} At each time step $t$, the agent receives the same mask on the {\em Source Unit} and {\em Attack Target Unit} as described  for invalid action masking. The action shall still be sampled according to the re-normalized probability calculated in Equation~\ref{eq:softmax_masked_logits2}, which ensures no invalid actions could be sampled, but  the gradient is updated according to the probability calculated in Equation~\ref{eq:softmax_logits2}. We call this implementation \emph{naive invalid action masking} because its gradient does not replace the gradient of the logits corresponds to invalid actions with zero.
    
    \item \textbf{Masking removed.} At each time step $t$, the agent receives the same mask on the {\em Source Unit} and {\em Attack Target Unit} as described for invalid action masking, and trains in the same way as the agent trained under invalid action masking. However, we then evaluate the agent without providing the mask. In other words, in this scenario, we evaluate what happens if we train with a mask, but then perform without it.
\end{enumerate}

We evaluate the agent's performance in maps of sizes $4\times4$, $10\times10$, $16\times16$, and $24\times24$. All maps have one base and one worker for each player, and each worker is located near the resources. 

\subsection{Evaluation Metrics}
We used the following metrics to measure the performance of the agents in our experiments: $r_{\text{episode}}$ is the average episodic return  over the last 10 episodes. 
$a_\text{null}$ is the average number of actions that select a {\em Source Unit} that is not valid  over the last 10 episodes. $a_\text{busy}$is the average number of actions that select a {\em Source Unit} that is already busy executing other actions over the last 10 episodes. $a_\text{owner}$ is the average number of actions that select a {\em Source Unit} that does not belong to Player 1 over the last 10 episodes. $t_\text{solve}$ is the percentage of total training time steps that it takes for the agents' moving average episodic return  of the last 10 episodes to exceed 40. $t_\text{first}$ is the percentage of the total training time step that it takes for the agent to receive the first positive reward. 

\subsection{Evaluation Results}

We report the results in Figure~\ref{fig:invalid_action_masking_vs_penalty_part} and in Table~\ref{tab:all_results}. Here we present a list of important observations:

\textbf{Invalid action masking scales well. } Invalid action masking is shown to scale well as the number of invalid actions increases; $t_\text{solve}$ is roughly 12\% and very similar across different map sizes. In addition, the $t_\text{first}$ for invalid action masking is not only the lowest across all experiments (only taking about $0.05-0.08\%$ of the total time steps), but also consistent against different map sizes. This would mean the agent was able to find the first reward very quickly regardless of the map sizes.

\textbf{Invalid action penalty does not scale.} Invalid action penalty is able to achieve good results in $4\times 4$ maps, but it does not scale to larger maps. As the space of invalid action gets larger, sometimes it struggles to even find the very first reward. E.g. in the $10\times10$ map, agents trained with invalid action penalty with $r_{\text{invalid}}=-0.01$ spent 3.43\% of the entire training time just discovering the first reward, while agents trained with invalid action masking take roughly 0.06\% of the time in all maps. In addition, the hyper-parameter $r_{\text{invalid}}$ can be difficult to tune. Although having a negative $r_{\text{invalid}}$ did encourage the agents not to execute any invalid actions (e.g. $a_\text{null}$, $a_\text{busy}$, $a_\text{owner}$ are usually very close to zero for these agents), setting $r_{\text{invalid}}=-1$ seems to have an adverse effect of discouraging exploration by the agent, therefore achieving consistently the worst performance across maps.

\textbf{KL divergence explodes for naive invalid action masking.} According to Table~\ref{tab:all_results}, the $r_{\text{episode}}$ of naive invalid action masking is the best across almost 
all maps. In the $4\times4$ map, the agent trained with naive invalid action masking even learns to travel to the other side of the map to harvest additional resources. However, naive invalid action masking has two main issues: 1) As shown in the second row of Figure~\ref{fig:invalid_action_masking_vs_penalty_full}, the average Kullback–Leibler (KL) divergence~\cite{kullback1951information} between the target and current policy of PPO for naive invalid action masking is significantly higher than that of any other experiments. Since the policy changes so drastically between policy updates, the performance of naive invalid action masking might suffer when dealing with more challenging tasks. 2) As shown in Table~\ref{tab:all_results}, the $t_\text{solve}$ of naive invalid action masking is more volatile and sensitive to the map sizes. In the $24\times 24$ map, for example, the agents trained with naive invalid action masking take 49.14\% of the entire training time to converge. In comparison, agents trained with invalid action masking exhibit a consistent $t_\text{solve} \approx 12\%$  in all maps. 

\textbf{Masking removed still behaves to some extent.} As shown in Figures~\ref{fig:sub:10x10}, masking removed is still able to perform well to a certain degree. As the map size gets larger, its performance degrades and starts to execute more invalid actions by, most prominently, selecting an invalid {\em Source Unit}. Nevertheless, its performance is significantly better than that of the agents trained with invalid action penalty even though they are evaluated without the use of invalid action masking. This shows that the agents trained with invalid action masking can, to some extent, still produce useful behavior when the invalid action masking can no longer be provided.

\section{Related Work}
There have been other approaches to deal with invalid actions. Dulac-Arnold, Evans, et al.~\cite{dulac2015deep} suggest embedding discrete action spaces into a continuous action space, using nearest-neighbor methods to locate the nearest valid actions. In the field of games with natural language, others propose to train an Action Elimination Network (AEN)~\cite{zahavy2018learn} to reduce the action set.

The purpose of avoiding executing invalid actions arguably is to boost exploration efficiency. Some less related work achieves this purpose by reducing the full discrete action space to a simpler action space. Kanervisto, et al.~\cite{kanervisto2020action} describes this kind of work as ``action space shaping'', which typically involves 1) action removals (e.g. Minecraft RL environment removes non-useful actions such as ``sneak''~\cite{johnson2016malmo}), and 2) discretization of continuous action space (e.g. the famous CartPole-v0 environment discretize the continuous forces to be applied to the cart~\cite{brockman2016openai}). Although a well-shaped action space could help the agent efficiently explore and learn a useful policy, action space shaping is shown to be potentially difficult to tune and sometimes detrimental in helping the agent solve the desired tasks~\cite{dulac2015deep}.

Lastly, Kanervisto, et al.~\cite{kanervisto2020action} and Ye, et al.~\cite{ye2019mastering} provide ablation studies to show invalid action masking could be important to the performance of agents, but they do not study the empirical effect of invalid action masking as the space of invalid action grows, which is addressed in this paper.

\section{Conclusions}
In this paper, we examined the technique of invalid action masking, which is a technique commonly implemented in policy gradient algorithms to avoid executing invalid actions. Our work shows that: 1) the gradient produced by invalid action masking is a valid policy gradient, 2) it works by applying a \emph{state-dependent differentiable function} during the calculation of action probability distribution, 3) invalid action masking empirically scales well as the space of invalid action gets larger; in comparison, the common technique of giving a negative reward when an invalid action is issued fails to scale, sometimes struggling to find even the first reward in our environment, 4) the agent trained with invalid action masking was still able to produce useful behaviors with masking removed. 

Given the clear effectiveness of invalid action masking demonstrated in this paper, we believe the community would benefit from wider adoption of this technique in practice.  Invalid action masking empowers the agents to learn more efficiently, and we ultimately hope that it will accelerate research in applying DRL to games with large and complex discrete action spaces.






\fontsize{9.5pt}{10.5pt}
\selectfont
\bibliographystyle{flairs}
\bibliography{references}

\clearpage
\onecolumn

\appendix
\section*{Appendices}
\addcontentsline{toc}{section}{Appendices}
\renewcommand{\thesubsection}{\Alph{subsection}}

\subsection{Details on the Training Algorithm Proximal Policy Optimization}
\label{sec:details_on_ppo}
The DRL algorithm that we use to train the agent is Proximal Policy Optimization (PPO)~\cite{schulman2017proximal}, one of the state of the art algorithms available. There are two important  details regarding our PPO implementation that warrants explanation, as those details are not elaborated in the original paper. The first detail concerns how to generate an action in the \texttt{MultiDiscrete}  action space as defined in the OpenAI Gym environment~\cite{brockman2016openai} of gym-microrts~\cite{huang2019comparing}, while the second detail is about the various code-level optimizations utilized to augment performance. As pointed out by Engstrom, Ilyas, et al.~\cite{engstrom2019implementation}, such code-level optimizations could be critical to the performance of PPO.

\subsubsection{Multi Discrete Action Generation}
To perform an action $a_t$ in $\mu$RTS, according to Table~\ref{tab:action-components}, we have to select a Source Unit, Action Type, and its corresponding action parameters. So in total, there are $hw\times6\times4\times4\times4\times4\times6\times hw = 9216(hw)^2 $  number of possible discrete actions (including invalid ones), which grows exponentially as we increase the map size. If we apply the PPO directly to this discrete action space, it would be computationally expensive to generate the distribution for $9216(hw)^2 $ possible actions. To simplify this combinatorial action space, \texttt{openai/baselines}~\cite{baselines} library proposes an idea to consider this discrete action to be composed from some smaller \emph{independent} discrete actions. Namely, $a_t$ is composed of smaller actions 
\begin{align*}
    &a_{t}^{\text{Source Unit}},a_{t}^{\text{Action Type}},a_{t}^{\text{Move Parameter}},a_{t}^{\text{Harvest Parameter}}, \\
    &a_{t}^{\text{Return Parameter}},a_{t}^{\text{Produce Direction Parameter}}, a_{t}^{\text{Produce Type Parameter}},a_{t}^{\text{Attack Target Unit}}
\end{align*}
And the policy gradient is updated in the following way (without considering the PPO's clipping for simplicity)
\begin{align*}
     &\begin{aligned}
         \sum_{t=0}^{T-1}\nabla_{\theta}\log\pi_{\theta}(a_t|s_t)G_t  &= \sum_{t=0}^{T-1}\nabla_{\theta}  \left( \sum_{d\in D} \log\pi_{\theta}(a^{d}_{t}|s_t) \right)G_t\\
         &= \sum_{t=0}^{T-1}\nabla_{\theta}  \log \left( \prod_{d\in D} \pi_{\theta}(a^{d}_{t}|s_t) \right)G_t
     \end{aligned} \\
     &D = \{\text{Source Unit},\text{Action Type},\text{Move Parameter},\text{Harvest Parameter},\text{Return Parameter},\\
     &\text{Produce Direction Parameter},\text{Produce Type Parameter},\text{Attack Target Unit},\}
\end{align*}

Implementation wise, for each Action Component of range $[0, x-1]$, the logits of the corresponding shape $x$ is generated, which we call Action Component logits, and each $a^{d}_{t}$ is sampled from this Action Component logits. Because of this idea, the algorithm now only has to generate $hw+6+4+4+4+4+6+hw = 2hw + 36$ number of logits, which is significantly less than $9216(hw)^2$. To the best of our knowledge, this approach of handling large multi discrete action space is only mentioned by Kanervisto et, al~\cite{kanervisto2020action}.

\subsubsection{Code-level Optimizations}
Here is a list of code-level optimizations utilized in this experiments. For each of these optimizations, we include a footnote directing the readers to the files in the  \emph{openai/baselines}~\cite{baselines} that implements these optimization.

\begin{enumerate}
    \item \textbf{Normalization of Advantages\footnote{\url{https://github.com/openai/baselines/blob/ea25b9e8b234e6ee1bca43083f8f3cf974143998/baselines/ppo2/model.py\#L139}}:} After calculating the advantages based on GAE, the advantages vector is normalized by subtracting its mean and divided by its standard deviation.
    \item \textbf{Normalization of Observation\footnote{\url{https://github.com/openai/baselines/blob/ea25b9e8b234e6ee1bca43083f8f3cf974143998/baselines/common/vec_env/vec_normalize.py\#L4}}:} The observation is pre-processed before feeding to the PPO agent. The raw observation was normalized by subtracting its running mean and divided by its variance; then the raw observation is clipped to a range, usually $[-10,10]$.
    \item \textbf{Rewards Scaling\footnote{\url{https://github.com/openai/baselines/blob/ea25b9e8b234e6ee1bca43083f8f3cf974143998/baselines/common/vec_env/vec_normalize.py\#L4}}:} Similarly, the reward is pre-processed by dividing the running variance of the discounted the returns, following by clipping it to a range, usually $[-10,10]$.
    \item \textbf{Value Function Loss Clipping\footnote{\url{https://github.com/openai/baselines/blob/ea25b9e8b234e6ee1bca43083f8f3cf974143998/baselines/ppo2/model.py\#L68-L75}}:} The PPO implementation of {\em  openai/baselines} clips the value function loss in a manner that is similar to the PPO's clipped surrogate objective:

    \[V_{loss} =\max \left[\left(V_{\theta_{t}}-V_{t a r g}\right)^{2},\left(V_{\theta_{t-1}} + \mbox{clip}\left(V_{\theta_{t}}-V_{\theta_{t-1}}, -\varepsilon, \varepsilon\right)\right)^{2}\right]\]
    where $V_{t a r g}$ is calculated by adding $V_{\theta_{t-1}}$ and the  $A$ calculated by General Advantage Estimation\cite{schulman2015high}.
    \item \textbf{Adam Learning Rate Annealing\footnote{\url{https://github.com/openai/baselines/blob/ea25b9e8b234e6ee1bca43083f8f3cf974143998/baselines/ppo2/ppo2.py\#L135}}:} The Adam \cite{kingma2014adam} optimizer's learning rate is set to decay as the number of timesteps agent trained increase.
    \item \textbf{Mini-batch updates\footnote{\url{https://github.com/openai/baselines/blob/ea25b9e8b234e6ee1bca43083f8f3cf974143998/baselines/ppo2/ppo2.py\#L160-L162}}:} The PPO implementation of the {\em  openai/baselines} also uses minibatches to compute the gradient and update the policy instead of the whole batch data such as in {\em open/spinningup}.
    The mini-batch sampling scheme, however, still makes sure that every transition is sampled only once, and that the all the transitions sampled are actually for the network update. 
    \item \textbf{Global Gradient Clipping\footnote{\url{https://github.com/openai/baselines/blob/ea25b9e8b234e6ee1bca43083f8f3cf974143998/baselines/ppo2/model.py\#L107}}:} For each update iteration in an epoch, the gradients of the policy and value network are clipped so that the ``global $\ell_{2}$ norm'' (i.e. the norm of the concatenated gradients of all parameters) does not exceed 0.5.
    \item \textbf{Orthogonal Initialization  of weights\footnote{\url{https://github.com/openai/baselines/blob/ea25b9e8b234e6ee1bca43083f8f3cf974143998/baselines/a2c/utils.py\#L58}}:} The weights and biases of fully connected layers use with orthogonal initialization scheme with different scaling. For our experiments, however, we always use the scaling of 1 for historical reasons. 
\end{enumerate}



\subsection{Additional Details on the $\mu$RTS Environment Setup}
\label{appendix:additional-detals-murts}
Each action in $\mu$RTS takes some internal game time, measured in ticks, for the action to be completed. \emph{gym-microrts} \cite{huang2019comparing} sets the time of performing harvest action, return action, and move action to be 10 game ticks. Once an action is issued to a particular unit, the unit would be considered as a ``busy'' unit and would therefore no longer be able to execute any actions until its current action is finished. To prevent the DRL algorithms from repeatedly issuing actions to ``busy'' units, \emph{gym-microrts} allows performing frame skipping of 9 frames such that from the agent's perspective, once it executes the harvest action, return action, or move action given the current observation, those actions would be finished in the next observation. Such frame skipping is used for all of our experiments.

\begin{figure*}[t]
\centering
{\includegraphics[width=0.97\textwidth]{charts_episode_reward/legend.pdf}}\hfill
\subfloat[$4\times4$ Map]{\includegraphics[width=0.245\textwidth]{charts_episode_reward/MicrortsMining4x4F9-v0.pdf}}
\subfloat[$10\times10$ Map]{\label{fig:sub:10x10}\includegraphics[width=0.245\textwidth]{charts_episode_reward/MicrortsMining10x10F9-v0.pdf}}
\subfloat[$16\times16$ Map]{\label{fig:sub:10x10}\includegraphics[width=0.245\textwidth]{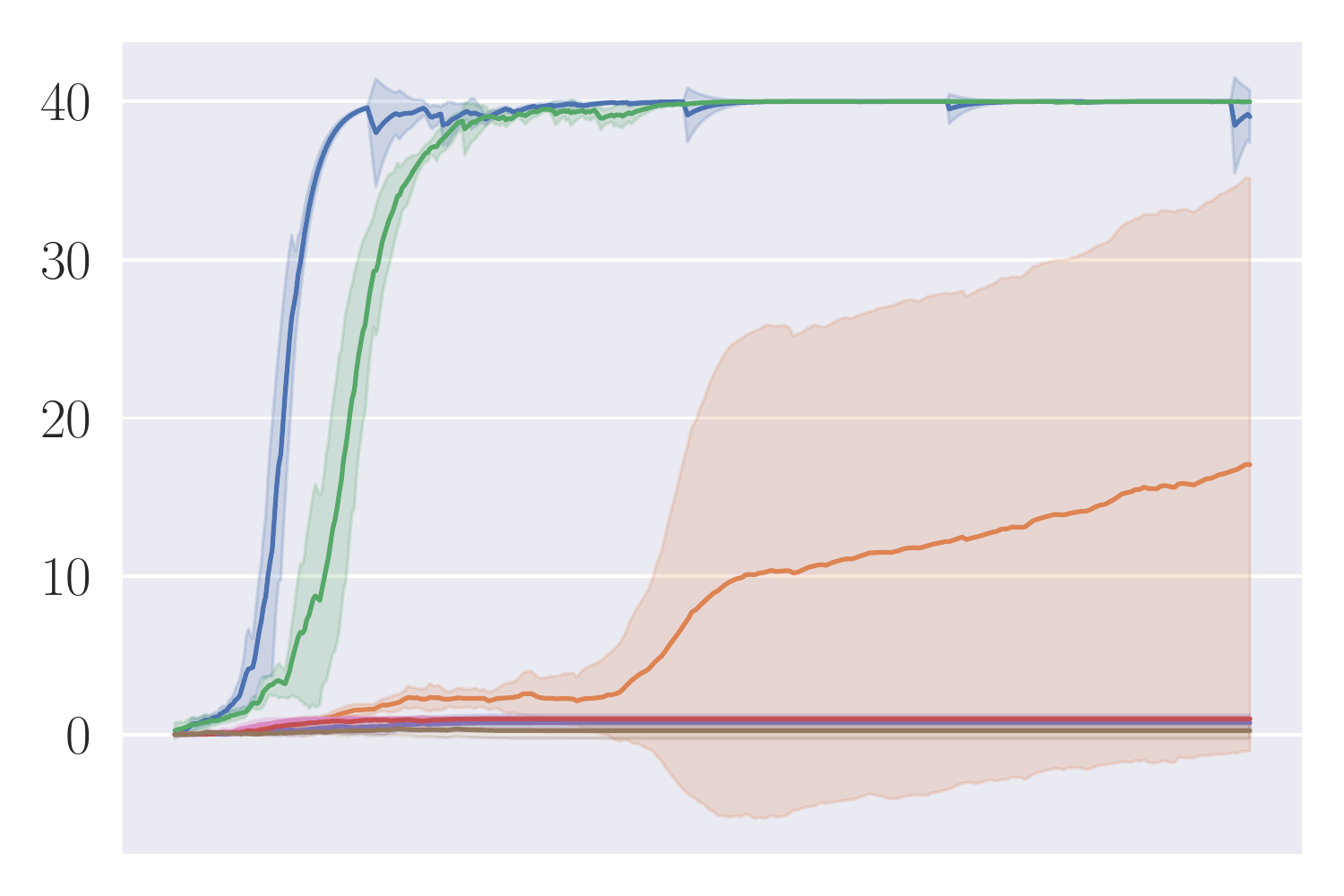}}
\subfloat[$24\times24$ Map]{\label{fig:sub:10x10}\includegraphics[width=0.245\textwidth]{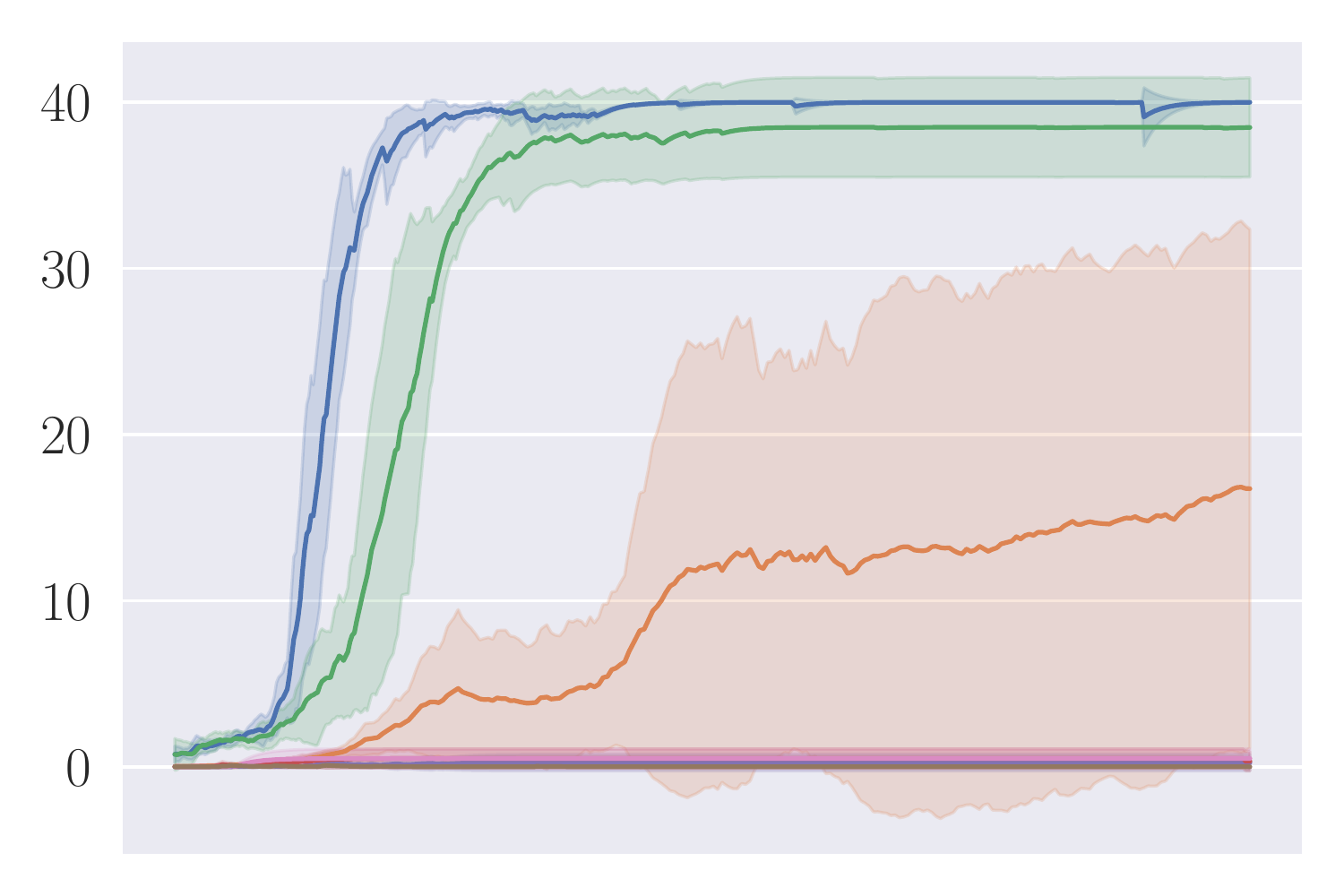}}\hfill\\
\subfloat[$4\times4$ Map]{\includegraphics[width=0.245\textwidth]{losses_approx_kl/MicrortsMining4x4F9-v0.pdf}}
\subfloat[$10\times10$ Map]{\includegraphics[width=0.245\textwidth]{losses_approx_kl/MicrortsMining10x10F9-v0.pdf}}
\subfloat[$16\times16$ Map]{\includegraphics[width=0.245\textwidth]{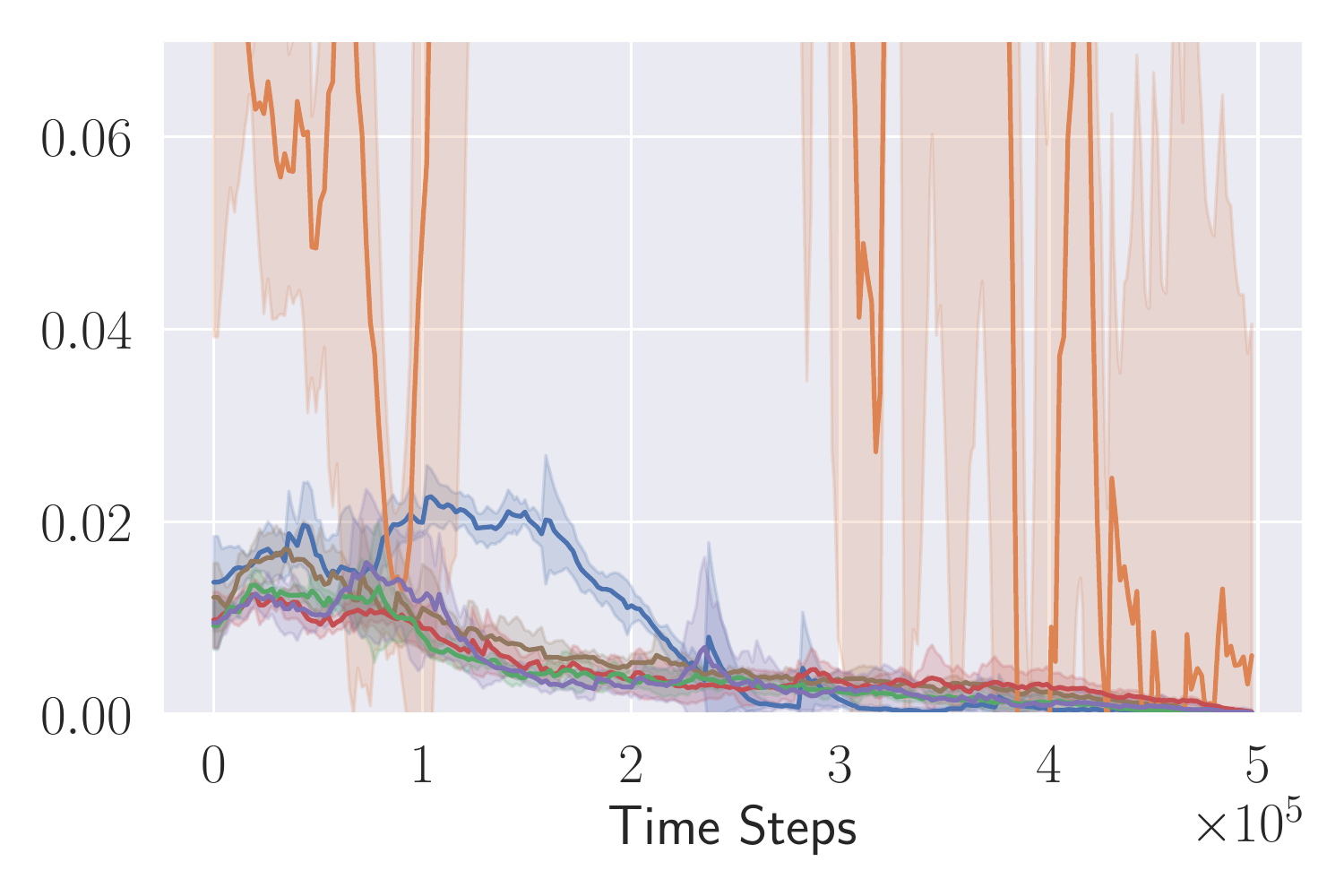}}
\subfloat[$24\times24$ Map]{\includegraphics[width=0.245\textwidth]{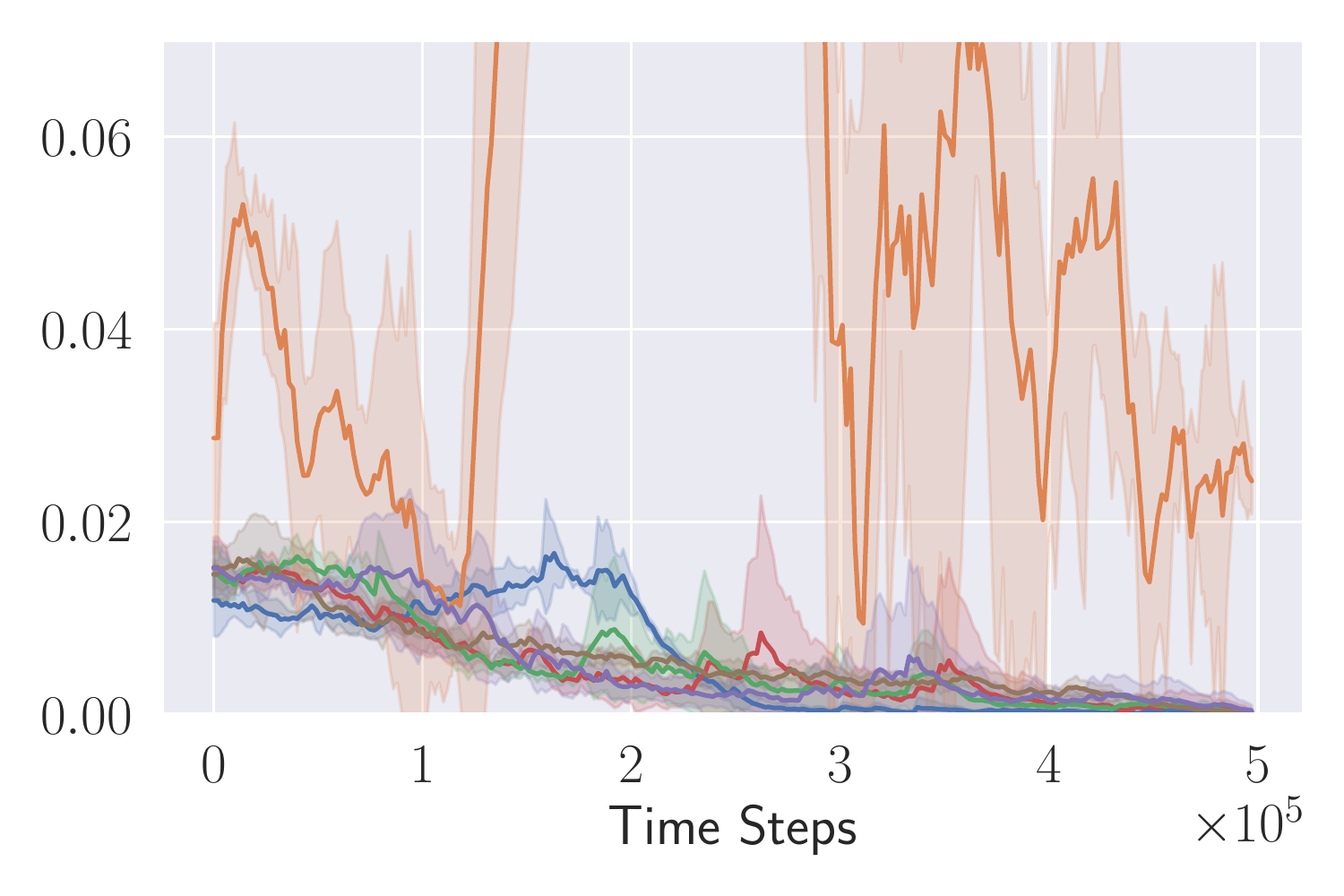}}
  \caption{The first row shows the episodic return over the time steps, and the second row shows the Kullback–Leibler (KL) divergence between the target and current policy of PPO over the time steps. The shaded area represents one standard deviation of the data over 4 random seeds. Curves are exponentially smoothed with a weight of 0.9 for readability.}
  \label{fig:invalid_action_masking_vs_penalty_full}
\end{figure*}

\subsection{Reproducibility}
It is important to for the research work to be reproducible. We now present the list of hyperparameters used in Table~\ref{tab:params} and the list of neural network architecture in  Table~\ref{tab:architecture}. In addition, we provide the source code to reproduce our experiments at GitHub\footnote{\url{https://github.com/neurips2020submission/invalid-action-masking}}.
\begin{table}[t]
\centering
\caption{The list of feature maps and their descriptions.}
\begin{tabular}{lll} 
\toprule
Features  & Planes & Description \\
\midrule
Hit Points & 5 & 0, 1, 2, 3, $\geq 4$  \\ 
Resources & 5 & 0, 1, 2, 3, $\geq 4$  \\ 
Owner &3 & player 1, -, player 2 
\\ 
Unit Types &8 & -, resource, base, barrack, \\
&&worker, light, heavy, ranged \\ 
Current Action &6& -, move, harvest, \\
&&return, produce, attack\\ 
\bottomrule
\end{tabular}
\label{tab:features}
\end{table}
\begin{table}[t]
\centering
\caption{The list of experiment parameters and their values.}
\begin{tabular}{ll} 
\toprule
Parameter Names  & Parameter Values\\
\midrule
Total Time steps & 500,000 time steps \\ 
$\gamma$ (Discount Factor) & 0.99 \\ 
$\lambda$ (for GAE) & 0.97 \\ 
$\varepsilon$ (PPO's Clipping Coefficient) & 0.2 \\ 
$\eta$ (Entropy Regularization Coefficient) & 0.01 \\ 
$\omega$ (Gradient Norm Threshold)& 0.5 \\
$K$ (Number of PPO Update Iteration Per Epoch)& 10 \\
$\alpha_{\pi}$ Policy's Learning Rate &  0.0003 \\
$\alpha_{v}$ Value Function's Learning Rate &  0.0003 \\
\bottomrule
\end{tabular}
\label{tab:params}
\end{table}  

\begin{table}[t]
\centering
\caption{Neural Network Architecture. To explain the notation, let us provide detailed description of the architecture used in $24\times24$ map as an example. The input to the neural network is a tensor of shape $(24, 24, 27)$. 
The first hidden layer convolves 16 $3\times3$ filters with stride 1 with the input tensor followed by a $2\times 2$ max pooling layer~\protect\cite{ranzato2007unsupervised} 
and applies a  rectifier nonlinearity~\protect\cite{nair2010rectified}. The second hidden layer similarly convolves 32 $2\times2$ filters with stride 1 followed by a $2\times 2$ max pooling layer and applies a  rectifier nonlinearity. The final hidden layer is a fully connected linear layer consisting of 128 rectifier units. The output layer is a fully connected linear layer with $2hw + 36=1188$ number of output.
}
\begin{tabular}{ll} 
\toprule
$4\times4$ &$10\times10$ \\
\midrule
Conv2d(27, 16, kernel\_size=2,),&   Conv2d(27, 16, kernel\_size=3,), \\
MaxPool2d(1),&                     MaxPool2d(1), \\
ReLU()&                            ReLU(), \\
Flatten()&                         Conv2d(16, 32, kernel\_size=3), \\
Linear(144, 128),&              MaxPool2d(1), \\
ReLU(),&                           ReLU() \\
Linear(128, 68)&         Flatten() \\
                                   &Linear(1152, 128), \\
                                   &ReLU(), \\
                                   &Linear(128, $236$) \\
\midrule
$16\times16$ &$24\times24$ \\
\midrule
Conv2d(27, 16, kernel\_size=3),&  Conv2d(27, 16, kernel\_size=3, stride=1), \\
MaxPool2d(1),&                   MaxPool2d(2), \\
ReLU(),&                         ReLU(), \\
Conv2d(16, 32, kernel\_size=3),&  Conv2d(16, 32, kernel\_size=2, stride=1), \\
MaxPool2d(1),&                   MaxPool2d(2), \\
ReLU()&                          ReLU() \\
Flatten()&                       Flatten() \\
Linear(4608, 128),&          Linear(800, 128), \\
ReLU(),&                         ReLU(), \\
Linear(128, 548)&       Linear(128, 1188) \\
\bottomrule
\end{tabular}

\label{tab:architecture}
\end{table}

\end{document}